\documentclass{article}
\usepackage[preprint]{neurips_2026}
\usepackage{natbib}
\setcitestyle{authoryear,open={(},close={)}} 
\usepackage{graphicx} 
\usepackage{subcaption} 

\usepackage{verbatim} 
\usepackage{xcolor}
\usepackage{bbold}
\usepackage{hyperref}
\usepackage{geometry}
\usepackage{amsmath, amsthm, amssymb} 
\usepackage{multirow}  
\usepackage{booktabs}  
\usepackage{algorithm}
\usepackage{algpseudocode}

\DeclareMathSymbol{\shortminus}{\mathbin}{AMSa}{"39}

\usepackage{mdframed}
\newenvironment{systemprompt}[1][]{%
    \begin{mdframed}[linewidth=0.8pt, innerleftmargin=8pt, innerrightmargin=8pt, nobreak=true,
        frametitle={#1}]
    \small\ttfamily
}{%
    \end{mdframed}
}

\newtheorem{theorem}{Theorem}
\newtheorem{lemma}{Lemma}
\newcommand{\todo}[1]{\textcolor{red}{#1}}

\newcommand{\snJAT}[1]{\textcolor{teal}{${\leftarrow}\hspace{-3pt}{\bullet}$}\marginpar{\textcolor{teal}{${\leftarrow}\hspace{-3pt}{\bullet}$}{\textcolor{teal}{  {\scriptsize{JAT: #1}}}}}}

\newcommand{\snJHO}[1]{\textcolor{teal}{${\leftarrow}\hspace{-3pt}{\bullet}$}\marginpar{\textcolor{teal}{${\leftarrow}\hspace{-3pt}{\bullet}$}{\textcolor{teal}{  {\scriptsize{JHO: #1}}}}}}

\newcommand{\snBH}[1]{\textcolor{teal}{${\leftarrow}\hspace{-3pt}{\bullet}$}\marginpar{\textcolor{teal}{${\leftarrow}\hspace{-3pt}{\bullet}$}{\textcolor{teal}{  {\scriptsize{BH: #1}}}}}}

\title{Teaching and Learning under Deductive Errors}
\author{%
  Jan Arne Telle \\
  Department of Informatics\\
  University of Bergen \\
  Norway\\
 \\
  \And
  Brigt Håvardstun \\
  Department of Informatics \\
  University of Bergen \\
  Norway\\
   \AND
 Jose Hernandez-Orallo \\
 VRAIN -  Universitat Politecnica de Valencia \\
  Spain \\
 Leverhulme Centre for the Future of Intelligence - University of Cambridge \\
 United Kingdom \\
}

\begin{document}

\maketitle

\begin{abstract}\footnote{This research is part of the MT4XAI project financed by the Research Council of Norway}
Most models of machine teaching and learning assume the learner makes no errors in its internal deductive inference. However, humans and large language models in few-shot learning regimes are two important examples of learners where this does not hold. They fail on some consistency checks, and they can fail stochastically. In this paper we introduce a teaching and learning framework that takes these deductive errors into account. We specifically study the case of machine teaching, as different characterizations of the teacher can account for both machine teaching and learning. In an overhauled Probably Approximately Correct (PAC) setting, we study theoretically that, for some estimated error level, the teacher must find a PAC teaching set that with high probability will lead the learner to guess a hypothesis that is approximately correct. We study the computational complexity of six different problems related to computing optimal PAC teaching sets. We give XP algorithms parametrized by size of teaching set, with tight runtime bounds under standard complexity assumptions like ETH. These results are complemented with a small experimental study of which teaching and learning protocols can best represent the observed behavior in some LLM teaching sessions.
\end{abstract}

\section{Introduction}

Learning mostly involves inductive inference, i.e., the process of going from facts to rules, from data to models. Many different paradigms for machine learning have been introduced according to how data is presented and how hypotheses are represented, and learning algorithms have been devised to make this process effective and efficient. However, most of these paradigms assume that the learner performs perfect deductive inference, i.e., the process of going from rules to facts, from models to data. In other words, learners are assumed to be consistent and complete when checking the alternative hypotheses during the learning process. Given a concept $c$ in the hypothesis space $C$, and an example $x$, the learner invokes a consistency function $c(x)$ that returns true if the example belongs to concept $c$ and false otherwise. We are not assuming that $c$ is the correct hypothesis --that's the inductive inference search--, but we assume that if $c$ were correct then the learner would be able to classify all examples perfectly with it. 
This assumption has seemed reasonable for learners that are implemented on computers with clear algorithms and representation languages whose deductive inference mechanisms are consistent and complete, e.g., applying a logistic regression to a new input, a neural network to a new image or a decision tree to a feature vector. 

Following this tradition, machine {\em teaching}, where examples are chosen by the teacher, rather than randomly sampled from a distribution, has also assumed that the learner is deductively consistent and complete \citep{goldman1995complexity,balbach2008measuring,zilles2011models,gao2017preference,no-clash}. 
There are only very few examples in the literature where the representation is chosen in such a way that checking consistency becomes intractable or even incomputable (and hence the function cannot be complete) making approximations necessary \citep{ferri2026redundancy}.

However, humans and pretrained large language models are two important examples of learners that should not be modelled with a perfect consistency function $c$. They fail on some consistency checks, and they can fail stochastically. This is crucial for understanding few-shot learning and teaching with them, which is  particularly relevant in explainable AI, as it is generally assumed that errors will originate from inductive inference but not deductive inference~\citep{yang2021mitigating,Hvardstun2023XAIWM}. For these learners, we should consider they possess approximate consistency functions $\hat{c}_k(x_i)$ for each concept, whose error can be quantified by an error level $\gamma_L$ as:

\[ \gamma_L(c_k, x_i) = P[ c_k(x_i) \neq \hat{c}_k(x_i) ] \]

It is natural to consider that this error function depends on both the learner $L$, the example $x_i$ and the concept $c_k$. For instance, it is easier to make a mistake when checking whether 8250715087503 is prime than whether it is even. On the other hand, it is also easier to make a mistake when checking whether 8250715087503 is prime than whether 27 is a prime. 

Under this new perspective, traditional accounts of learning and teaching would simply not work. In particular, some of the simplest learning protocols, where the learner is given examples one after another and discards all the concepts that are inconsistent with the given evidence, until only one remains, would often fail catastrophically: a mistake in the consistency check at an earlier stage would rule out the right concept forever. Instead, we could keep all concepts and update the consistency vector for each concept as the process progresses, but would this lead to the identification of the right concept?\footnote{Note  the learner can differ on $\hat{c}_k$ in different occasions, since $\gamma_L$ is stochastic. Here we are not assuming that the concepts are stochastic or there is some inherent noise in the world, but the checking of the consistency of concepts is stochastic.}

We claim that the right approach for exploring the learnability and teachability in this paradigm is an adaptation of the Probably Approximately Correct (PAC) setting. Note that in classical PAC Learning \citep{valiant1984theory,valiant2013probably} the stochasticity comes from sampling of examples, whereas our focus is on the stochasticity arising from errors in consistency checking.
Combining these two sources of stochasticity does not pose any fundamental problems, but it does complicate the presentation.
Therefore, we will in this paper assume the setting of Machine Teaching \cite{zhu2018overview}, where the examples given to the learner are chosen by a deterministic and consistent teacher rather than sampled, to highlight the new findings in this field that we will call PAC Teaching.

Because of the errors in consistency checking, the learner will never know for certain that the concept $c$ being selected is the correct target concept $c^*$, not (only) because of the inductive inference problem, but because of a deductive inference fallibility, represented by the error level $\gamma_L$. It is then more natural to consider that the selected $c$ can only be {\em approximately correct} even in cases of small hypothesis classes and a sufficient large number of examples. Also, because the error is of stochastic character, it also makes more sense to consider how {\em probable} it is that an approximately correct concept can be attained. The question becomes: given a learner  $L$, characterized by a deductive error function $\gamma_L$ and a particular learning protocol, can we devise teaching algorithms that lead to effective PAC teaching? In this paper we show that the answer is yes. After first defining the PAC teaching framework formally, and arguing for its suitability, we show in Section \ref{sec:betteralgo} that the problem of computing optimal PAC teaching sets, which comes in six different variants, can be solved by algorithms whose asymptotic runtimes are tight under standard complexity assumptions.

The rest of the paper is organized as follows.
In Section 2 we define PAC Teaching formally. In Section \ref{sec:ill} we argue that the PAC teaching framework is useful whenever the teaching success is inversely correlated with deductive errors, and show that this holds in experiments with LLMs as learners. In Section \ref{sec:LandT} we discuss various assumptions on learner and teacher behavior, and give results showing the advantage of taking the imperfect consistency checks into account. In Section \ref{sec:betteralgo} we give algorithms and tight complexity bounds  on the problems of computing optimal PAC teaching sets, and we also give some heuristic algorithms for computing PAC teaching sets.

\section{Definitions}

As in classical Machine Teaching, we have a binary consistency matrix between an example set $X$ and a concept set $C$, and denote by $c(x)$ the class label 1 or 0 (In or Out, True or False) of example $x \in X$ for concept $c \in C$. An example $x$ labelled $b \in \{0,1\}$, given as $(x,b)$, is consistent with $c$ if $c(x)=b$. We have a target concept, and the goal in classical machine teaching is to find a small set of labelled examples, the so-called teaching set $S$, such that the only concept that is consistent with all of $S$ is the target concept.\footnote{As noted above, we could also allow the teaching set to be randomly sampled, for a better alignment with formal models of Machine Learning, at the expense of introducing another source of stochasticity.}

To account for learners that are not perfect when doing consistency checks we introduce error probabilities  $\gamma_L:C \times X \rightarrow [0,1]$ where $\gamma_L(c,x)$ is the probability of learner $L$ making an error when checking the value of $c(x)$. 
In the setting we introduce, when the learner $L$ is given example $x$ labelled $b$ the error probability $\gamma_L(c,x)$ is used to decide if concept $c$ is \emph{L-consistent} with $(x,b)$ or not. If indeed $c(x)=b$ then the learner concludes with probability $1-\gamma_L(c,x)$ that $c$ is L-consistent with this labelled example (and with probability $\gamma_L(c,x)$ that $c$ is not L-consistent), while if $c(x) \neq b$ 
we have the opposite (the learner concludes with probability $\gamma_L(c,x)$ that $c$ is L-consistent and probability $1-\gamma_L(c,x)$ that $c$ is not L-consistent). This imperfect and stochastic version of $c$ that $L$ has will be denoted as $\hat{c}_L$ or simply $\hat{c}$. Note these errors can vary for each example and concept pair, but are symmetric with respect to checking for a correct or incorrect label, as the source of error is only tied to checking the value of $c(x)$. 

For a concrete case, consider a concept class of subsets of natural numbers and teaching target concept $c^* = {\tt ending\texttt{-}in\texttt{-}7}$ using the example $x=27$ and checking consistency with $c = {\tt primes}$, a pair for which $\gamma_L(c,x)=0.1$. The learner is given $(x=27,b=1)$ and even though $c(x)=0$ so $c$ is not consistent there is a $10\%$ chance that the learner will conclude that $c$ is L-consistent, because of the error involved in $\hat{c}(x)$, in effect mistaking 27 as being prime, and if so reinforcing the belief that $c$ could be a good concept for the unknown $c^*$, when it is not. A teacher aware of this could instead have chosen an example of lower error to rule out $c$, say $(x=13,b=0)$ with $\gamma_L(c,13)=0.01$.

Informally, in PAC teaching, when using a good teaching set $S$ for the target concept $c^*$, it should be {\bf probable} that the concept $c$ chosen by the learner (e.g., the concept that is L-consistent with most labelled examples in $S$), is {\bf approximately} equal to the target $c^*$.
Note that we have two distinct cases, that we may call teaching for {\bf identification} or teaching for {\bf employment}. When the goal is identification then we want the chosen concept $c$, which is {\em named} by the learner and used by a perfect deductive user (e.g., the teacher), to be consistent with the target $c^*$ on as many examples as possible, i.e. with high expected value of the event $c(x)=c^*(x)$ for some distribution over examples, as measured by the following {\em similarity} function
\begin{equation*}
  sim(c,c^*)  
  = \mathbb{E}_{x \sim {\mathcal D}(X)} [ P({c}(x) = c^*(x))] \hspace{0.2cm}
  ( = 1/|X|\textstyle{\sum_{x \in X}}{\mathbb{1}[c(x) = c^*(x)]} \mbox{if uniform $\mathcal D$, finite $X$})
    \label{eq:simexp}
\end{equation*}

However, when the goal is employment by the learners themselves, then, based on the reasonable assumption that the same level of errors the learner $L$ makes in consistency checking occur not only during teaching but also in employment,
 we instead define a similarity function $sim_L$ as follows

\begin{equation*}
  sim_L(c,c^*)  
  = \mathbb{E}_{x \sim {\mathcal D}(X)} [ P(\hat{c}(x) = c^*(x))] 
    \label{eq:simexpL}
\end{equation*}

 In other words, measuring the probability of $c$ being L-consistent with $x$ when labelled by $c^*(x)$. 
If we expand using the connection between $\hat{c}(x)$ and $\gamma_L(c,x)$, and assume that the distribution over examples is uniform and the example set is finite, then we get the following

\begin{equation*}
  sim_L(c,c^*)   
  = \frac{1}{|X|}\sum_{x \in X}{\begin{cases}
        1-\gamma_L(c,x)\> \mbox{ if } c(x) = c^*(x)\\
        \gamma_L(c,x) \qquad \mbox{ if } c (x) \neq c^*(x)
    \end{cases}}
     \label{eq:simL}
\end{equation*}

In PAC teaching for identification, for a given target concept $c^*$, probability $p$, and approximation ratio $q$, the goal is to find a small teaching set $S$ such that the concept $L(S)$ chosen by the learner $L$ when given $S$ satisfies Equation \ref{eq:PACTeaching} (in PAC teaching for employment replace $sim$ by $sim_L$)
\begin{equation}
    P[sim(L(S), c^*)\geq q] \geq p
    \label{eq:PACTeaching}
\end{equation}

For identification teaching with approximation ratio $q$ we define the {\em good} subset  of concepts to be $G(c^*,q)= \{c \in C: sim(c,c^*) \geq q\}$, and say that these are $q$-similar to $c^*$ (for employment replace $sim$
 by $sim_L$ to get $G_L(c^*,q)$ that are $q_L$-similar to $c^*$). The Probability of $q$-Approximately Correct Teaching for identification (or employment) when using teaching set $S$ is then the probability that $L(S) \in G(c^*,q)$  (or $L(S) \in G_L(c^*,q)$).  This depends on the behavior of the learner $L$ which will be discussed in Section \ref{sec:LandT}.
However, it should be clear that higher errors $\gamma_L$ will lead to larger teaching sets for fixed $p$ and $q$, as illustrated in Figure
\ref{fig:DualImages}.

\begin{figure}[!htbp]
    \centering
    \includegraphics[width=0.80\linewidth]{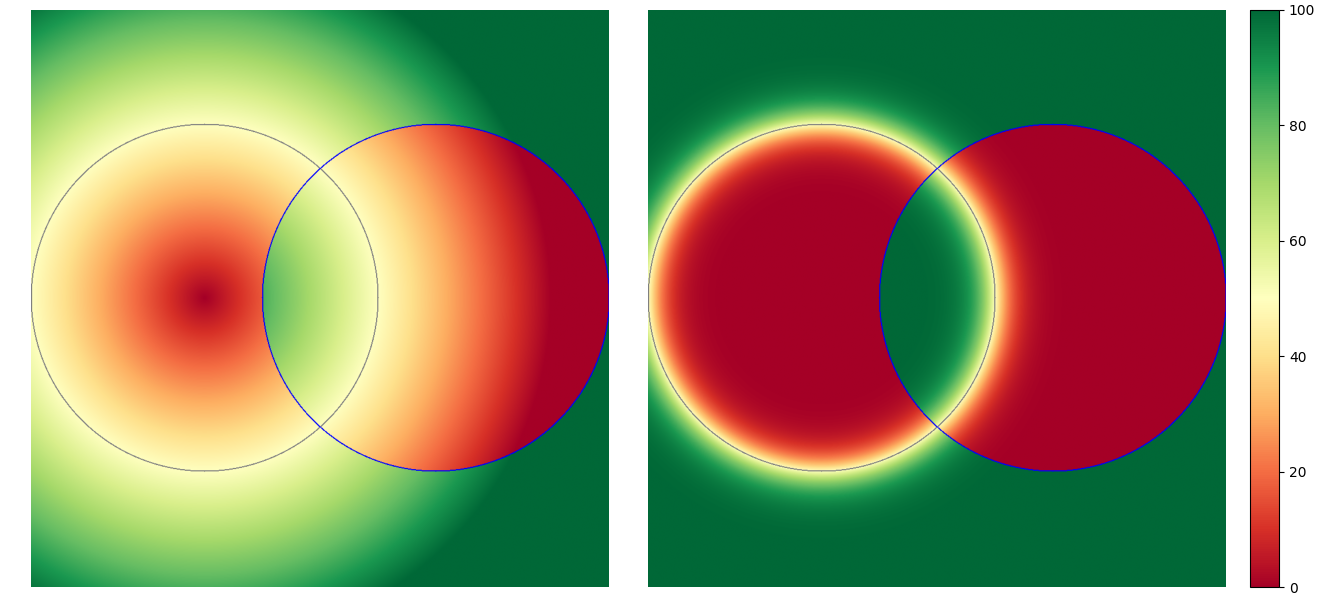}
    \caption{For fixed $p,q$ higher errors give larger PAC teaching sets. In both figures above the concept class consists of circles in the plane with $c(x)=1$ when point $x$ is within circle $c$, with only two such concepts shown: target $c^*$ the circle on the right, and $c'$ the circle on the left. The color of a point $x$ gives the probability of $c'$ being L-consistent with $x$ when labelled by $c^*(x)$, from high probability in dark green to low probability in dark red. Thus, the more green and less red the higher the L-similarity between $c'$ and the target.
    For the figure on the right we have non-zero error $\gamma_L(c',x)>0$ only when $x$ is very close to perimeter of $c'$. 
    The symmetric difference is then dark red, apart from the perimeter of $c'$ where the error kicks in. 
         In the left figure  $\gamma_L(c',x)$ is higher, and proportional to the distance from  $x$ to perimeter of $c'$, and the teacher has less examples to choose from to rule out $c'$ with high probability.
         If many circles had this same error then more teaching examples would be needed to arrive with high probability at a circle that closely approximates the target (small symmetric difference), as any single example would be dark red for only few circles.
         }
    \label{fig:DualImages}
\end{figure}

\section{Applications of PAC Teaching}\label{sec:ill}

In this section we argue that (deductive) errors in consistency checking occur with many (inductive) learners, in particular with LLMs, and that the PAC teaching framework is useful whenever the success of teaching is correlated with the deductive errors.

As discussed in the introduction, the learning process involves at least two  types of errors on part of the learner, that we may call the deductive error (inference involving consistency checks) and the inductive error (inference in searching for the right hypothesis). What we may call the total error rate of the learning process depends on both types of errors, and possibly other types, for example the error related to understanding correctly the learning task. The new PAC teaching framework assumes that the deductive errors form an important part of the total whose influence is significant. 
In many cases this is clearly the case. What about using LLMs as the learner? 
Recent research on LLMs show reliability fluctuations that deviate from what we would expect given human criteria of difficulty, see e.g. \cite{zhou2024larger}.
Can deductive errors be derived from LLMs, and are they amenable to the PAC teaching framework?  To answer these questions we perform the following experiments:
\begin{itemize}
    \item Derive errors of the LLM on some fixed consistency checks, and call these deductive errors.
    \item Extract total success of the LLM as learner over varying teaching sets, on a task involving the same consistency checks, in scenarios believed to have both low and high total error
    \item For given teaching sets, do we see a negative correlation between total success and the deductive error, and is the variation in this correlation explainable by varying total error?
\end{itemize}

A positive answer to the final question would indicate that deductive errors can be derived from LLMs, and that the utility of the PAC teaching framework can also hold when we use LLMs as learners.

For experiments  to answer these questions, we consider a concept class over positive integers.\footnote{Code available at: \url{https://github.com/BrigtHaavardstun/PAC_teaching}} Let $C=\{c_{5},c_{7},c_{11},c_{13},c_{17}\}$  and $X=\{1\cdots 1000\}$, with 
$c_k(x)=1$ iff $x$ is a multiple of $k$. Note that for some prime numbers (2, 3, 5) there are easy algorithms deciding its multiples in base 10, and for this reason we included only one of them, namely 5, together with the next four prime numbers.
We use GPT-5-nano (gpt-5-nano-2025-08-07) \cite{singh2026openaigpt5card}  with reasoning effort set to minimal. 
Stronger models achieve near-perfect deductions on consistency  for this concept class, leaving insufficient signal for illustrative purposes. GPT-5-nano falls in a capability category where it is strong enough to follow the format of inductive teaching but not so strong that checking for divisors becomes trivial. We use the default sampling parameters (temperature = 1, top\_p = 1) to avoid introducing additional constraints on the model's output distribution. 



We first describe how we measure the deductive error. In the selected concept class, consistency checking equates to checking `Is $k$ a divisor of $x$', for a given $c_k$ and $x$. Hence, this is our user prompt. See the Appendix for precise System Prompts and User Messages used for all experiments, or see the Github repository.
Each pair of $x \in X$ and $k$ (for $c_k \in C$) is queried 20 times, and we calculate the deductive error $\gamma_L(c_k,x)$ as the fraction of times the LLM answered incorrectly (including nonsensical answers that form $0.3\%$ overall\footnote{Examples of nonsensical answers: Answer = None(...) , Answer = Benjamin(...), Answer =  Delete this line(...) .}). 

For the case of small total teaching error, we limit the experiment to teaching sets on a single positive example. In other words, define $X'= \{x \in X: \mbox{exactly one of 5,7,11,13,17 is a divisor of $x$}\}$ and use only teaching sets of type $\{x\}$ for $x \in X'$ such that the information uniquely identifies one concept. 
For each $x \in X'$, we query the LLM 100 times with the prompt 'Which of 5, 7, 11, 13 and 17 is a divisor of \{x\}?' and call the fraction of incorrect answers the {\em total teaching error} for this teaching set $\{x\}$. 

From all this error data we get the plot in Figure \ref{fig:llm_experiment} (a), where each point is a single $x$ value, whose color and shape determines which $c_k \in C$ is the target (i.e. which unique $k$ that divides $x$). The horizontal axis is the mean deductive error $1/5 \sum_{c_k \in C} \gamma_L(c_k,x)$ for $x$, and the vertical axis is the total teaching error for $\{x\}$.

\begin{figure}[ht]
    \centering
    \begin{subfigure}[b]{0.45\textwidth}
        \centering
        \includegraphics[height=0.27\textheight]{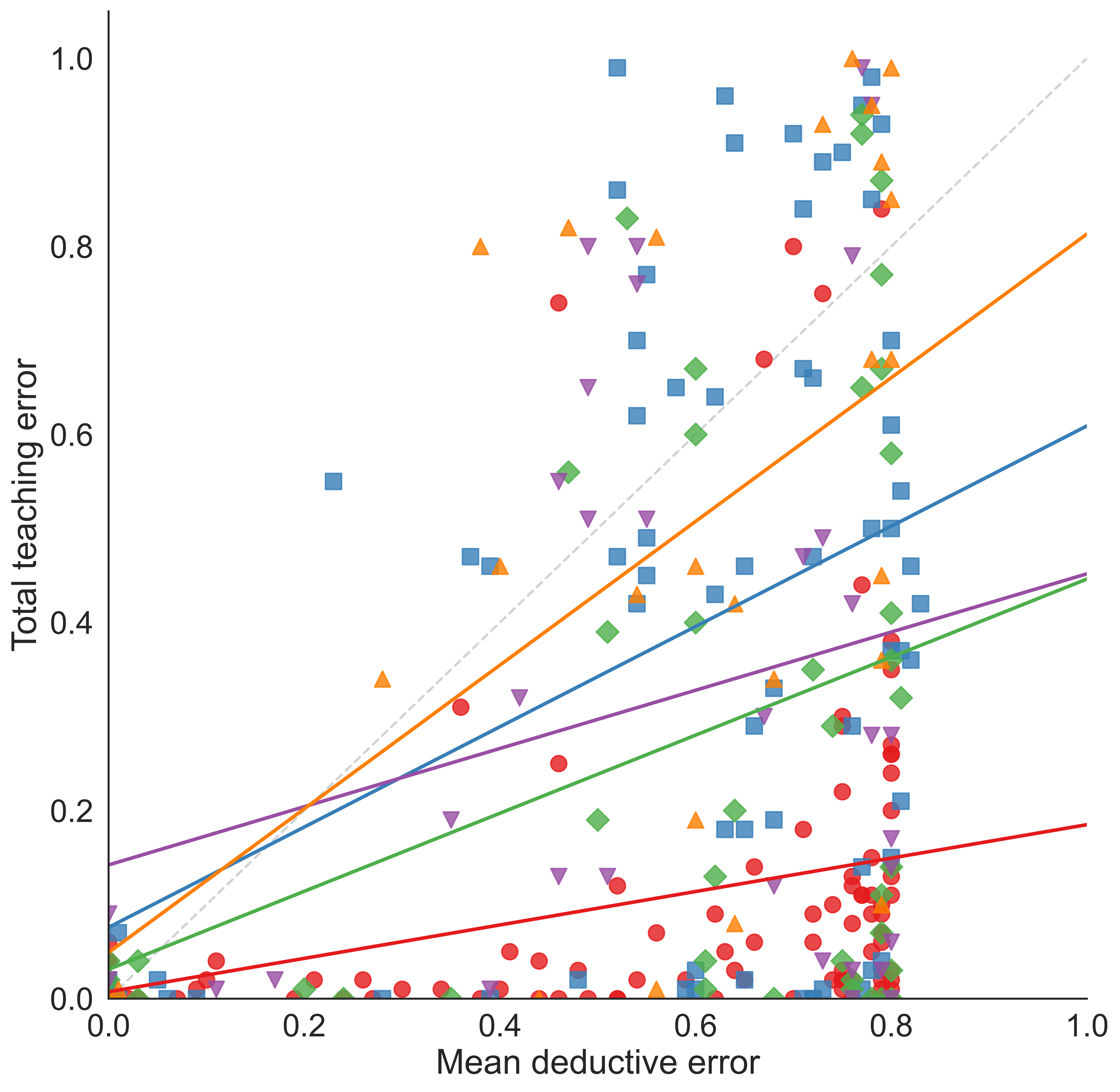} 
        \caption{Low total error. Teaching set of size 1.}
        \label{fig:llm_experiment_left}
    \end{subfigure}
    \hfill
    \begin{subfigure}[b]{0.45\textwidth}
        \centering
        \includegraphics[height=0.27\textheight]{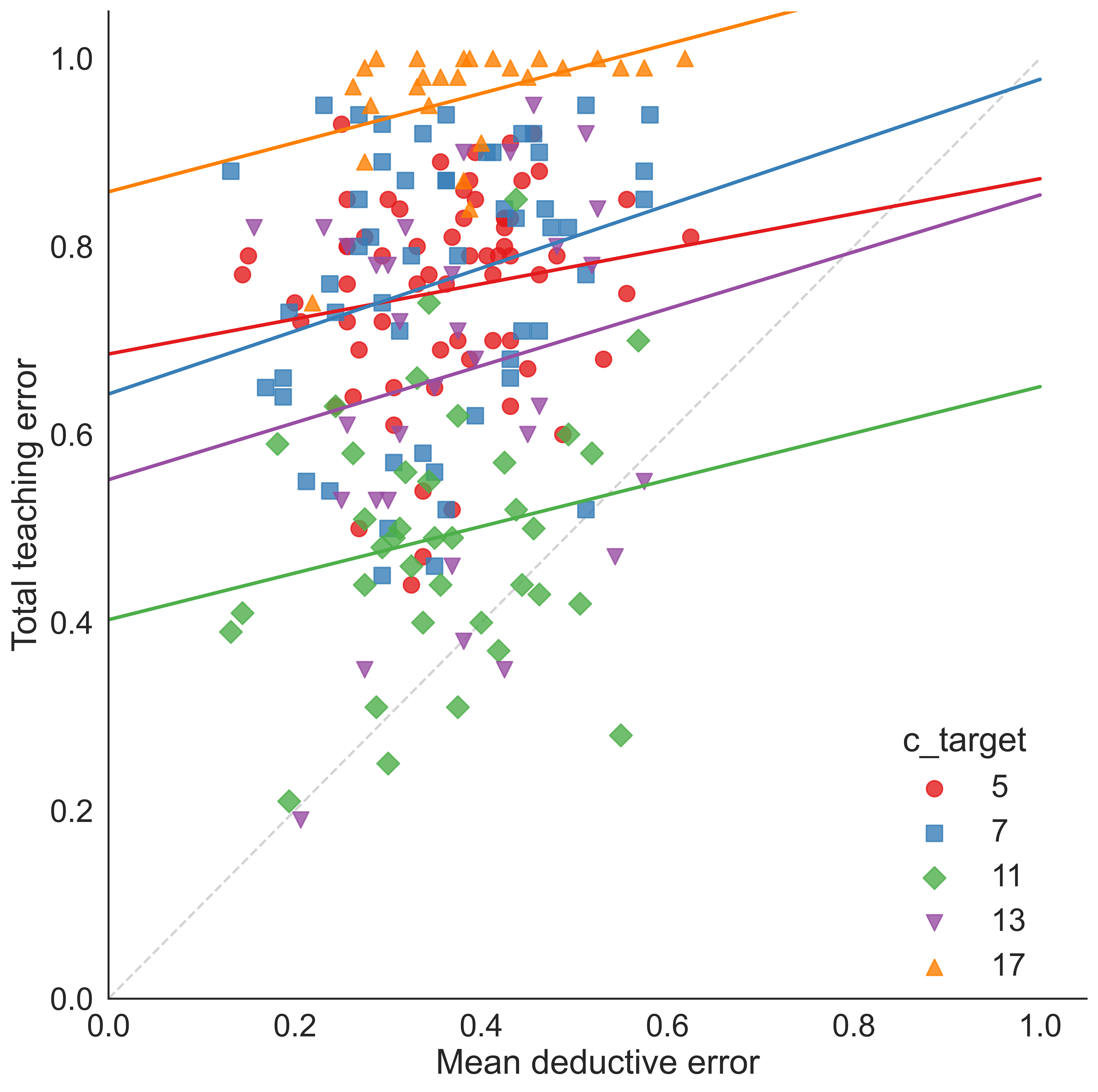} 
        \caption{High total error. Teaching set of size 2.}
        \label{fig:llm_experiment_right}
    \end{subfigure}
    \caption{
    Each data point is a teaching set, in (a) of size one: a single positive $x$, and in (b) of size two: one negative $x_1$ and one positive $x_2$. Total teaching error on the vertical axis. On the horizontal axis we have mean deductive error (averaged over all 5 values of k) for $x$ in (a) and averaged over $x_1$ and $x_2$ in (b). 
    Each linear fit shows a positive correlation, which is required for the PAC teaching framework to be applicable, and this is more pronounced for low total error in (a), as expected.}
    \label{fig:llm_experiment}
\end{figure}

Note the top left corner is empty, thus low deductive error predicts low total teaching error, and for each value of $k$ the increasing linear fit shows that deductive errors influence total teaching error, in line with the PAC teaching framework.
However, we also observe a cluster in the bottom-right corner, with high deductive error but low total teaching error. We attempt an explanation by example: checking if $7$ is a multiple of $735$ has a relatively high deductive error, but the teaching prompt specifies that only one of $7$ and $5$ is a divisor of $735$ so the LLM may confirm that $5$ is a divisor without even trying $7$, and hence $\gamma_L(7,735)$ does not impact the total teaching error. This hypothesis is supported by the fact that 5 has more points in this corner and also the lowest linear fit. 

While low total teaching error can occur even when mean deductive error is high, the key finding is that selecting examples with low deductive error reliably yields low teaching error. This experiment, in a scenario of low inductive effort, thus demonstrates a concrete case in which a teacher using teaching sets of low deductive errors can achieve a low total teaching error (often 0\% and always below 10\%) whereas a teacher oblivious to deductive error would risk a high total teaching error (often over 50\% and sometimes 100\%).  

We now move to a scenario with the same deductive errors but with higher total teaching error due to 
higher inductive load. We use the same concept class $C$ and example set $X$ but with teaching sets of size two, thus coming in pairs with one positive and one negative, with both examples needed to uniquely identify the target. Thus, note right away that there is no element in $X'$, used for teaching sets of size one, that will be used now.
To construct these pairs, we first select all positive examples that are consistent with at least two concepts. For each such example, we then pair it with a negative example that is consistent with the target concept but inconsistent with the remaining competing concepts, thereby eliminating them and ensuring that only the target is identified. As before, for each such pair we repeat 100 times, now with the following prompt: `Which of 5, 7, 11, 13 and 17 is a divisor of $x_1$, and not a divisor of $x_2$?', to extract total teaching error. See the plot in Figure \ref{fig:llm_experiment}(b) and compare it to \ref{fig:llm_experiment}(a). The results confirm our hypothesis: total error is higher, the outliers in the lower right quadrant have disappeared, and the correlation between total and deductive errors is still positive even though it has softened as the inductive error has become more prominent.

For other LLMs and for other teaching tasks, the derivation of deductive errors and total teaching error will have to change. The important finding is that these experiments support the hypothesis outlined at the start of this section, namely that deductive errors can be derived from LLMs as learners and that they are amenable to the PAC teaching framework.

\section{Learner and teacher behavior}\label{sec:LandT}

In this section we first specify two learner behaviors: a naive learner not aware of errors and a prudent PAC learner who is aware. Finally, three teacher behaviors are specified: a naive teacher not aware of errors, a heuristic PAC teacher and an optimal PAC teacher. 

Firstly, we may have a small or large concept class, in an extreme case even infinite, and we may also have a prior distribution over the concepts. Somehow, the learner will consider a finite subset of concepts, that we may call $\tilde{C}$, possibly $\tilde{C}=C$ for a small concept class, or $\tilde{C}$ is achieved by sampling from $C$ until it has a reasonable size containing a representative set of concepts including $c^*$. We will not go into details of how this is done, as it would vary based on the application.
From now on, we simply assume $\tilde{C}=C$, and note that this may be altered depending on the application.

The {\bf Naive learner} $L_N$ acts unaware of any errors being made in consistency checking. This learner initializes the set of possible guessed concepts to $P=C$, goes through each labelled example $(x,b) \in S$, that is received as a batch,  and discards any concept $c \in P$ that is not L-consistent, i.e. keeping $c$ only if $\hat{c}(x)=b$ \footnote{If  $\hat{c}=c, \forall c \in C$ this is the standard Version Space learner in Machine Teaching.}. When done we define $L_N(S)$ to be an arbitrary concept still left in $P$, unless $P$ is empty and $L_N$ reports failure. 

The {\bf Prudent PAC learner} $L_P$ is cognizant that errors can happen, so to avoid discarding the target it will for each concept $c \in C$ compute the count $Ct(c,S)$ of the number of labelled examples in $S$ that $c$ is L-consistent with, and arbitrarily guess a concept $L_P(S)$ with highest count.
In other words, if the max count is $k=max_{c \in C} Ct(c,S)$ the prudent learner selects a concept $L_P(S)$ arbitrarily from $\{c \in C: Ct(c,S)=k\}$. We make no assumptions about how this selection is made, and for a worst-case scenario we assume that if there is some concept that is not good among those with highest count it would have been chosen.




Let us now consider the teacher. 
The teacher knows the errors $\gamma_L$ made by the learner during consistency checking, and needs to find a teaching set $S$ making it probable that the concept $L_P(S)$ guessed by the prudent learner is approximately equal to the target $c^*$, either by $L_P(S)$ being $q$-similar to the target for identification teaching, or $L_P(S)$ being $q_L$-similar to the target for employment teaching. As is standard in Machine Teaching (and Machine Learning) the yardstick is the size of the teaching set, and for PAC Teaching this goal can be formulated in several ways. For fixed $q$ and $p$ the goal is to find a smallest teaching set $S$ satisfying Equation \ref{eq:PACTeaching}, whereas fixing $p$ and max-size $k$ the goal is to find a teaching set $S$ of size at most $k$ that for the highest possible $q$ satisfies Equation \ref{eq:PACTeaching}, and for fixed $q$ and $k$ the goal is to find a teaching set $S$ of size at most $k$ that for the highest possible $p$ satisfies Equation \ref{eq:PACTeaching}.

Note that we may have a small or large example set, 
in an extreme case even infinite, and we may also have a distribution over the examples. Somehow, the teacher will consider a finite subset of examples to include in the teaching set, that we could call $\tilde{X}$, possibly $\tilde{X}=X$ for a small example set or $\tilde{X}$ is achieved by sampling from $X$ until it has a reasonable size containing a representative set of examples.
We will not go into details of how this is done. 
From now on, we simply assume $\tilde{X}=X$, and note that this may be altered depending on the application.

Let us start with the {\bf Naive teacher} $T_N$ who acts as if there were no errors (as if $\gamma_L(c,x)=0$ always) and finds a teaching set $S$ labelled according to $c^*$ so that an error-free Version Space learner, who discards any concept not consistent, would be left only with $c^*$. 

A {\bf Heuristic PAC teacher} $T_H$ will on the other hand use some heuristic based on the errors in $\gamma_L$ and the behavior of $L_P$ for picking examples to include in $S$. In Section \ref{sec:heur} we discuss such heuristics.


Let us illustrate the benefits of PAC teaching with a very simple case highlighting the difference between the naive learner with a naive teacher versus the prudent learner with a heuristic teacher.
Consider two concepts and two examples where $c_1(x_1)=c_2(x_2)=1$, and $c_1(x_2)=c_2(x_1)=0$ with concept-independent errors $\gamma_L(c_1,x_1)=\gamma_L(c_2,x_1)=0.1$, and $\gamma_L(c_1,x_2)=\gamma_L(c_2,x_2)=0.2$. For target $c_2$ the naive teacher $T_N$ could use teaching set $S=\{(x_2,1)\}$ and get a probability of 0.64 that $L_P(S)=c_2$, i.e. that $c_2$ is the only concept with max L-consistency count. A heuristic PAC teacher $T_H$, noticing that $\gamma_L(c_2,x_1)<\gamma_L(c_2,x_2)$ would instead use $S=\{(x_1,0)\}$ for probability 0.81, and if requiring higher odds would use $S=\{(x_1,0),(x_2,1)\}$ for probability 0.8928. 

In this case errors are concept-independent, so on any correctly labelled example no concept has higher probability than the target concept of being $L$-consistent. Increasing the teaching set $S$ therefore increased the probability of $L_P(S)$ being the target concept. On the other hand, the naive learner $L_N$ will always be more likely to rule out the target concept $c^*$ when seeing a new labelled example $(x,c^*(x))$ with $\gamma_L(c^*,x)>0$. The above observation basically proves the following.

\begin{theorem}
If errors are concept-independent and below one-half, then adding more examples to the teaching set cannot decrease the chances of successful PAC teaching for the Prudent learner.
\end{theorem}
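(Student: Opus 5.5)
The plan is to read the statement under the setting used in the example just before it. The teaching set is labelled by $c^*$. Errors are concept-independent, i.e.\ $\gamma_L(c,x)=\gamma(x)$ for every $c\in C$, with $\gamma(x)<1/2$. The L-consistency checks for different concept--example pairs are independent. ``Successful PAC teaching'' means that the Prudent learner ends with $c^*$ among its top-count concepts, and, in the worst case, with no concept outside the good set among them. I will compare an arbitrary $S$ with $S'=S\cup\{(x,c^*(x))\}$ through the differences $D_c(S)=Ct(c^*,S)-Ct(c,S)$ for $c\neq c^*$. The learner succeeds exactly when $D_c(S)>0$ for every bad $c$ and $D_c(S)\ge 0$ for every $c$.

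The first step is a one-example computation. The new example raises $Ct(c^*,\cdot)$ by one with probability $1-\gamma(x)$.
\begin{itemize}
\item If $c(x)=c^*(x)$, then $c$ also gets $+1$ with probability $1-\gamma(x)$, independently of $c^*$. So the increment of $D_c$ is symmetric, taking values in $\{-1,0,1\}$ with mean $0$.
\item If $c(x)\neq c^*(x)$, then $c$ gets $+1$ only with probability $\gamma(x)<1-\gamma(x)$. The increment of $D_c$ is then $+1$ with probability $(1-\gamma)^2$ and $-1$ with probability $\gamma^2$, so it stochastically favours $c^*$.
\end{itemize}
The only place $\gamma<1/2$ is used is here: it makes the target at least as likely as any competitor to gain a point. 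This is what the remark in the paper calls ``no concept has higher probability than the target concept of being $L$-consistent''.

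The second step lifts this to the success event. I would build a coupling of the new example's coin flips so that the vector of increments $(D_c(S')-D_c(S))_c$ dominates a vector with nonnegative expected effect on every margin. I would then argue that the success region $\{D_c>0 \text{ for bad } c\}$ is an up-set, so its probability cannot decrease. For competitors distinguished by $x$, this monotone-coupling argument goes through cleanly.

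The hard step is the competitors that agree with $c^*$ on $x$. Their margin receives a mean-zero $\pm1$ perturbation. That perturbation can turn a margin of $+1$ into a tie, or a tie into $+1$, so strict monotonicity of the success probability is not automatic. Indeed, if a bad concept $c$ agrees with $c^*$ on $x$ and currently trails by exactly one, adding $x$ creates a tie with probability $\gamma(1-\gamma)>0$.

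I would first try to handle this by a parity/symmetry argument on the margin. The hope is that the probability of moving from a losing to a winning margin balances the reverse, using that the margin distribution given $S$ is unimodal with a positive mode. If that fails, I would state and prove the theorem in the form the argument actually supports. One version restricts to the case the paper illustrates: concepts disagree with $c^*$ on every added example, or $S$ starts from the empty set. The other proves the pairwise statement that $\mathbb{E}[D_c]$ and $P(D_c\ge 0)$ are non-decreasing for every competitor $c$. In either version, the Naive learner's opposite behaviour follows immediately, since it retains $c^*$ only with probability $\prod_{x\in S}(1-\gamma(x))$, which strictly decreases whenever $\gamma(x)>0$.
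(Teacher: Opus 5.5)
The gap is in your second step, and it is not confined to competitors that agree with $c^*$ on $x$. Your claim that for distinguished competitors ``this monotone-coupling argument goes through cleanly'' is false. The probability of an up-set such as $\{D_c>0\}$ is guaranteed not to decrease only if the increment of $D_c$ is nonnegative (stochastically at least $0$). A nonnegative \emph{mean} does not suffice.

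For a competitor distinguished by $x$, your own one-example computation gives an increment of $-1$ with probability $\gamma(x)^2$. This increment is independent of $D_c(S)$ and lies in $\{-1,0,1\}$, so the change in $P(D_c>0)$ is exactly $(1-\gamma)^2P(D_c(S)=0)-\gamma^2P(D_c(S)=1)$. That is negative as soon as $S$ makes a margin of one much likelier than a tie.

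This already happens in the paper's two-concept illustration with $\gamma(x_1)=0$ and $\gamma(x_2)=0.1$. Take target $c_2$ and $q=1$; both examples distinguish the two concepts. Then $S=\{(x_1,0)\}$ succeeds with probability $1$. But $S\cup\{(x_2,1)\}$ produces a tie with probability $0.1\cdot 0.1$, so success drops to $0.99$, and to $0.995$ even with uniform tie-breaking. With $\gamma(x_1)=0.01$ and $\gamma(x_2)=0.3$ it drops from about $0.980$ to about $0.902$.

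So the statement, read as monotonicity under enlarging $S$, is false, and no coupling can prove it. The paper's justification is precisely your first step followed by the same leap: no concept is likelier than the target to be L-consistent, therefore success cannot decrease. It does not contain a missing idea you could borrow.

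Your fallback plans do not rescue it.
\begin{itemize}
\item For an agreeing competitor the change is $\gamma(1-\gamma)\,(P(D_c=0)-P(D_c=1))$. A unimodal margin with positive mode makes this nonpositive, so the parity/symmetry argument points the wrong way.
\item The restriction to examples on which all competitors disagree with $c^*$ is refuted by the example above.
\item $P(D_c\ge 0)$ is not monotone either: from $S=\emptyset$, one distinguishing example lowers it from $1$ to $1-\gamma^2$.
\end{itemize}
What your computation does prove is that $\mathbb{E}[D_c]$ is non-decreasing for every $c$, since the increment has mean $0$ or $1-2\gamma(x)>0$. Your Naive-learner remark is also correct. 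A correct theorem has to be weakened to something of that kind.

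Separately, the paper's success event is $\max_{g\in G}X_g>\max_{b\in B}X_b$, which does not require $c^*$ itself to be among the top-count concepts. Your formalization ``$D_c\ge 0$ for all $c$'' is therefore a different, stricter event. In the counterexample $G=\{c^*\}$, so the two coincide.
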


Finally, the {\bf Prudent-optimal PAC teacher} $T_P$ will find an optimal teaching set $S$ for $L_P$, depending on what is the optimization criterium: given $q$ and $p$ minimize size $k$ of $S$, or given $q$ and $k$ maximize the achievable $p$, or given $p$ and $k$ maximize the achievable $q$? 
Let us define this formally, both for identification (id) and for employment (em), for a total of six distinct optimization goals. 
Note each of these assumes the prudent learner $L_P$, but for ease of presentation we do not specify this in the notation used.

Firstly, let $0 \leq q,p \leq 1$ and $k$ a positive integer and say that $S$ is $(q,p,k)$-PAC-id if $|S| \leq k$ and $L_P(S)$ satisfies Equation \ref{eq:PACTeaching}. Likewise, say $(q,p,k)$-PAC-em when replacing $sim$ by $sim_L$.
Next,  say that $S$ is $(*,p,k)$-PAC-id optimal if it is $(q',p,k)$-PAC-id for some $q'$ and no $S'$ is $(q'',p,k)$-PAC-id for any $q'' < q'$.
Also, say that $S$ is $(q,*,k)$-PAC-id optimal if it is $(q,p',k)$-PAC-id for some $p'$ and no $S'$  is $(q,p'',k)$-PAC-id for any $p'' < p'$.
Also, say that $S$ is $(q,p,*)$-PAC-id optimal if it is $(q,p,|S|)$-PAC-id and no $S'$ is $(q,p,k)$-PAC-id for any $k<|S|$. 
 Analogously we define three PAC-em optimal sets. 
The optimal teacher $T_P$ thus comes in six versions, three for identification as defined below, and three analogous ones (probable-em-, approx-em-, and size-em-) for employment. 

\begin{itemize}
    \item the probable-id-optimizer takes as input $q,k$ and finds $S$ that is $(q,*,k)$-PAC-id optimal
    \item the size-id-optimizer takes as input $q,p$ and finds $S$ that is $(q,p,*)$-PAC-id optimal
    \item the approx-id-optimizer takes as input $p,k$ and finds $S$ that is $(*,p,k)$-PAC-id optimal
\end{itemize}





\section{Computing optimal and heuristic PAC Teaching sets}\label{sec:betteralgo}


In this section we consider various algorithms computing teaching sets, of both the prudent-optimal teacher and of the heuristic teacher.

\subsection{Algorithms for computing optimal teaching sets}

We start by investigating the computational complexity of the task faced by the prudent-optimal teacher $T_P$. We achieve tight runtime bounds for all six versions of this task: teaching for identification or employment while optimizing for size or approximation or probability. We give algorithms whose runtimes provide upper bounds matching the lower bounds given by standard complexity assumptions. 

The algorithms can be adapted for identification teaching or employment teaching simply by switching between similarity functions $sim$ and $sim_L$. 
 Firstly, note that in time $O(|C||X|)$, for a fixed approx-factor $q$ and target $c^*$, we compute the set of Good concepts $q$-similar (or $q_L$-similar) to $c^*$. 
The following Lemma for a given teaching set $S$ is very useful. 

\begin{lemma}\label{lem}
Given consistency matrix $C \times X$ and $\gamma_L$, teaching set $S$, target concept $c^*$ and approx factor $q$, and Good concepts $q$-similar (or $q_L$-similar) to $c^*$,
compute the probability that $L_P(S)$ is Good in time $O(|C|\cdot|S|^2)$.
\end{lemma}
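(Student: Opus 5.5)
The plan is to compute, for each concept, the distribution of its L-consistency count, and then combine these distributions using the worst-case tie-breaking rule. For a fixed concept $c$ and teaching set $S=\{(x_1,b_1),\dots,(x_m,b_m)\}$ with $m=|S|$, the events "$c$ is L-consistent with $(x_i,b_i)$" are independent Bernoulli trials. The success probability of trial $i$ is $p_{c,i}=1-\gamma_L(c,x_i)$ if $c(x_i)=b_i$, and $\gamma_L(c,x_i)$ otherwise. Hence $Ct(c,S)$ is Poisson-binomial. We compute its full distribution $D_c[j]=P[Ct(c,S)=j]$, for $j=0,\dots,m$, by the standard dynamic program: process the examples one at a time and update $D_c^{(i)}[j]=D_c^{(i-1)}[j](1-p_{c,i})+D_c^{(i-1)}[j-1]p_{c,i}$. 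Each update costs $O(m)$, so one concept costs $O(m^2)$ and all concepts together cost $O(|C|m^2)$. This is the dominant term in the claimed bound.

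Next comes the combination step. Because of the worst-case convention for $L_P$, the event "$L_P(S)$ is Good" equals the event that the maximum count over Bad concepts is strictly smaller than the maximum count over Good concepts. Let $k$ denote the maximum count. Equivalently, there is some $j$ such that some Good concept has count $j$ and every Bad concept has count $<j$. Writing $\mathcal{G}$ for the Good concepts and $\mathcal{B}=C\setminus\mathcal{G}$, and using independence across concepts (the errors are independent across concepts as well as across examples), we get
\[
P[L_P(S)\in \mathcal{G}]=\sum_{j=0}^{m}\Big(\prod_{c\in\mathcal{G}}F_c(j)-\prod_{c\in\mathcal{G}}F_c(j-1)\Big)\prod_{c\in\mathcal{B}}F_c(j-1),
\]
where $F_c(j)=P[Ct(c,S)\le j]$ and $F_c(-1)=0$. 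The first factor is exactly the probability that the maximum Good count equals $j$. We obtain the cumulative distributions $F_c$ from $D_c$ by prefix sums in $O(m)$ per concept. For each $j$, the products over $\mathcal{G}$ and $\mathcal{B}$ cost $O(|C|)$, so this step costs $O(|C|m)$ in total. Overall the running time is $O(|C|\cdot|S|^2)$, as claimed. The Good set is given as input, so the choice between identification and employment only enters through which concepts are labelled Good.

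The main obstacle is justifying the independence structure and the exact event being measured. On independence, I would state explicitly that the learner's errors on distinct (concept, example) checks are independent. The per-concept Poisson-binomial factorisation needs this across examples, and the product over concepts needs it across concepts. On the event, I would argue carefully that the worst-case arbitrary selection makes a tie between a Good and a Bad concept at the maximum count count as failure, so the inequality must be strict. Some care is also needed for the edge case where $S$ contains a repeated example. There I would treat each occurrence as a separate independent check, in line with the stochastic footnote in the introduction.
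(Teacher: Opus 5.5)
Your proposal is correct and takes essentially the same route as the paper's Algorithm 1: a per-concept $O(|S|^2)$ dynamic program for the distribution of $Ct(c,S)$, prefix sums to get CDFs, products over Good and Bad concepts for the CDFs of the two maxima, and the sum $\sum_j P[\max_{g\in G}X_g=j]\cdot P[\max_{b\in B}X_b\le j-1]$ encoding the worst-case tie-breaking, for $O(|C|\cdot|S|^2)$ overall. Two of your details are small refinements rather than a different approach: you state explicitly the independence across concepts and examples, which the paper's product formula uses only implicitly, and you start the sum at $j=0$ with $F_c(-1)=0$, which returns the correct value $1$ when there are no Bad concepts, a case where the paper's sum from $i=1$ misses the event that the maximal Good count is $0$.
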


\begin{proof}
In Algorithm 1 we give the pseudo code. Note that there are five outer for loops.

\begin{algorithm}[!htbp]
\caption{Calculate probability of Approximately Correct teaching for a given teaching set. Runtime $O(|C|\cdot|S|^2)$}\label{alg:p_of_ACT}
\begin{algorithmic}
\Require $S$, $C$, $\gamma_L$, $c^*$ and $q$, yielding a partition of concepts into $G$ good and $B$ bad that we also assume as input.
\State $k \gets |S|$
\For{$c \in C$} \Comment{$O(|C|\cdot |S|^2)$}
    \State $PMF_c = DP_{sub}(c,S)$ \Comment{ $O(|S|^2)$} 
    \For{$i\in \{0\cdots k\}$} \Comment{$O(|S|)$}
        \State $P[X_c = i]=PMF_c[i]$
    \EndFor 
\EndFor

\For{$c \in C$} \Comment{$O(|C|\cdot |S|)$}
    \State $P[X_c \leq 0]= P[X_c = 0]$
    \For{$i\in \{1\cdots k\}$} \Comment{$O(|S|)$}
        \State $P[X_c \leq i]=P[X_c=i]+P[X_c\leq i-1]$
    \EndFor 
\EndFor

\For{$i \in \{\ 0 \cdots k\}$} \Comment{$O(|G|\cdot|S|+|B|\cdot|S|)=O(|C|\cdot|S|)$}
    \State $P[max_{g\in G}X_g \leq i] = 1$
    \State $P[max_{b\in B}X_b \leq i] = 1$

    \For{$g\in G$} \Comment{$O(|G|)$}
        \State $P[max_{g\in G}X_g \leq i] \mathrel{*=} P[X_g \leq i]$
    \EndFor
    \For{$b\in B$} \Comment{$O(|B|)$}
        \State $P[max_{b\in B}X_b \leq i] \mathrel{*=} P[X_b \leq i]$
    \EndFor
\EndFor

\State $P[max_{g\in G}X_g = 0] = P[max_{g\in G}X_g \leq 0]$
\For{$i\in \{1\cdots k\}$} \Comment{$O(|S|)$}
    \State $P[max_{g\in G}X_g = i]= P[max_{g\in G}X_g \leq i]-P[max_{g\in G}X_g \leq i-1]$
\EndFor

\State $Probability = 0$
\For{$i \in \{1 \cdots k\}$} \Comment{$O(|S|)$}
    \State $Probability \mathrel{+=} P[max_{g\in G}X_g = i] \cdot P[max_{b\in B}X_b \leq i-1]$
\EndFor
\State \Return $Probability$
\end{algorithmic}
\end{algorithm}

\begin{algorithm}[!htbp]
\caption{$DP_{sub}(c,S)$. Find $DP[i][j]$, probability that $c$ is L-consistent $j$ times on the $i+1$ first examples of $S$, for some $0 \leq i \leq |S|-1, 0 \leq j \leq |S|$.}\label{alg:DP}
\begin{algorithmic}
\Require $c$, $S$, $\gamma_L$
\State $keep(x,c)=(1-\gamma_L(c,x)) \text{ if label of x is same as } c(x)$
\State $keep(x,c)=\gamma_L(c,x) \text{ if label of x is different from  } c(x)$
\State $k \gets |S|$
\State $DP \gets [k][k+1]$ \Comment{Initialization}
\State $DP[0][0] \gets 1- keep(S[0], c)$
\State $DP[0][1] \gets keep(S[0],c)$
\For{$j\in \{2\cdots k\}$}
    \State $DP[0][j] \gets 0$
\EndFor

\For{$i\in \{1\cdots k-1\}$}
    \State $DP[i][0] \gets DP[i-1][0]\cdot keep(S[i],c)$
\EndFor

\For{$i \in \{1\cdots k-1\}$} 
    \For{$j \in \{1\cdots k\}$}
        \State $DP[i][j] \gets DP[i-1][j-1]\cdot keep(S[i],c) + DP[i-1][j]\cdot (1-keep(S[i],c))$
    \EndFor
\EndFor
\For{$j \in \{0 \cdots k\}$}
\State $P[X_c = j] \gets DP[k-1][j]$
\EndFor
\end{algorithmic}
\end{algorithm}

Let us first argue for correctness.  We apply a top-down approach, arguing from the fifth and last outer for loop to the first one, and hence start with restating the goal. With $G$ the set of Good concepts and $B= C \setminus G$, we want to know the probability that a concept in $G$ is L-consistent more times than all concepts in $B$. We go over all the possible scenarios where at least one concept in $G$ is L-consistent at least once more than any concept in $B$, computed by the following formula

\begin{align*}
        P[max_{g \in G} X_g >& max_{b\in B} X_b | S] =\\
        &=\sum^{|S|}_{i=1}P[max_{g \in G} X_g = i | S] \cdot P[max_{b\in B} X_b \leq i-1 | S]
\end{align*}

In the above $X_c$ is simply shorthand for the count $Ct(c,S)$, and the reader can check that this formula corresponds to the fifth and last outer for loop of the pseudo-code. 

For sake of simplicity we omit the repeated $S$ from the rest of the equations. The fourth outer for loop computes the probability that the highest count over all concepts in C equals $i$, and the formula for this is given below. In the below equation we use a general set of concepts $S$, which in the pseudo-code is switched out with $G$ or $B$ depending on the calculation. 

\begin{equation*}
    P[max_{c \in S} X_c = i] = P[max_{c \in S} X_c \leq i] - P[max_{c \in S} X_c \leq i-1]
\end{equation*}

with the base case

\begin{equation*}
    P[max_{c \in S} X_c = 0] = P[max_{c \in S} X_c \leq 0] 
\end{equation*}

The third outer for loop computes the probability that the highest count is $\textit{at most}$ $i$ and the formula is given below

\begin{equation*}
    P[max_{c \in S} X_c \leq i] = \prod_{c\in S} P[X_c\leq i]
\end{equation*}

The second outer for loop calculates the probability of a concept $c$ having count at most $i$, and for this we use the Cumulative Distribution Function $CDF$

\begin{equation*}
    P[X_c \leq i] = P[X_c \leq i-1] + P[X_c= i]
\end{equation*}

with the base case

\begin{equation*}
    P[X_c \leq 0] = P[X_c= 0]
\end{equation*}

Finally the first outer for loop calculates the probability that a single concept $c$ has count exactly $i$, namely

\begin{equation*}
    P[X_c = i]\quad \forall_{i\in \{0\cdots |S|\}}
\end{equation*}

In our pseudo-code this is done by a straightforward Dynamic programming  called $DP_{sub}(c,S)$ given as a separate Algorithm \ref{alg:DP}. This DP fills a $|S|(|S|+1)$ size table where $DP[i][j]$, for some $0 \leq i \leq |S|-1, 0 \leq j \leq |S|$, holds the probability that $c$ is L-consistent $j$ times on the $i+1$ first examples.

As we have argued along the way, these formulas compute what we need and they correspond to the pseudo-code. We are therefore done with the correctness proof.

We now argue for the runtime. In the first outer for loop, the Dynamic Programming  called $DP_{sub}(c,S)$ has runtime $O(|S|^2)$, and since this is computed for each concept the first outer loop is $O(|C||S|^2)$.
The second and third outer for loops are $O(|C||S|)$, while the fourth and fifth are $O(|S|)$. We conclude that the runtime is $O(|C||S|^2)$
\end{proof}

This lemma is used as a subroutine  to give XP algorithms, parametrized by max-size $k$, for the probable-id-optimizer and  the probable-em-optimizer problems. We also show that the runtime of these algorithms cannot be improved to the point of removing $k$ from the exponent, unless the $W$-hierarchy collapses.

\begin{theorem}\label{thm:prob-opt}
The problem facing the Prudent-optimal PAC Teacher $T_P$ for probable-optimizing (id or em), given  consistency matrix $C \times X$ and $\gamma_L$, approx factor $q$ and max-size $k$, for a given target, can be solved in time $O(k^3 |C| |X|^k)$. Moreover, when parametrized by $k$ this problem is $W[2]$-hard.
\end{theorem}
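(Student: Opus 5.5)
The upper bound is a brute-force enumeration driven by Lemma~\ref{lem}. First, in time $O(|C||X|)$ compute the partition of $C$ into the good concepts $G$ ($q$-similar, or $q_L$-similar for employment) and the bad concepts $B$. The teacher is deterministic and consistent, so every example is labelled by $c^*$. A teaching set is therefore determined by its underlying set of examples, and there are at most $\sum_{j\le k}\binom{|X|}{j}=O(|X|^k)$ candidates. For each candidate $S$ we run Algorithm~\ref{alg:p_of_ACT}, which by Lemma~\ref{lem} takes $O(|C||S|^2)=O(k^2|C|)$ time, and we keep a set maximizing the returned probability. The loop over candidates, plus a factor $k$ to account for the $k$ possible sizes and for enumeration overhead (or, more crudely, summing over sizes $j\le k$), gives total time $O(k^3|C||X|^k)$. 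Correctness is immediate: the returned $S$ has $|S|\le k$ and maximal success probability $p'$, so by definition it is $(q,*,k)$-PAC-id (respectively em) optimal. If multisets are allowed, so that repeating an example counts, we enumerate $k$-tuples of $X$ instead; this is still $|X|^k$ and the bound is unchanged.

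For $W[2]$-hardness we reduce from \textsc{Set Cover} (equivalently \textsc{Dominating Set}) parametrized by solution size $k$, which is $W[2]$-complete. Take a universe $U$, a family $\mathcal F$, and the parameter $k$. Create the target $c^*$ and one bad concept $c_u$ for each $u\in U$. Create one example $x_F$ for each $F\in\mathcal F$, with $c^*(x_F)=1$, $c_u(x_F)=0$ if $u\in F$, and $c_u(x_F)=1$ otherwise. Set all errors $\gamma_L\equiv 0$. The Prudent learner then behaves deterministically: $c^*$ gets count $|S|$, and $c_u$ gets count $|S|$ exactly when no chosen $F$ covers $u$. In the worst case any tied bad concept is chosen, so success has probability $1$ iff $S$ corresponds to a set cover, and probability $0$ otherwise. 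So the optimal achievable $p$ equals $1$ iff a cover of size $\le k$ exists. The reduction is polynomial and preserves the parameter, so the probable-optimizing problem is $W[2]$-hard.

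The main obstacle is making every $c_u$ bad for the given $q$ while keeping $c^*$ good, under both $sim$ and $sim_L$. Since $sim(c^*,c^*)=1$ and, with $\gamma_L=0$, $sim_L=sim$, it suffices to choose $q=1$. Each $c_u$ must then differ from $c^*$ on some example. If $u$ lies in no $F$, the instance is trivially a no-instance, which we can output directly. Otherwise $c_u$ differs from $c^*$ on $x_F$ for some $F\ni u$, so $sim(c_u,c^*)<1$, and $G=\{c^*\}$ as required. One check remains: no two concepts $c_u,c_{u'}$ may coincide with $c^*$, and this holds automatically once every $u$ is covered by some set. If the paper insists on $\gamma_L$ being strictly positive, we would instead take $\gamma_L=\epsilon$ tiny and argue that a cover yields probability near $1$ while a non-cover yields probability at most $\epsilon$.
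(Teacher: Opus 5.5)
Your proposal is correct and follows the paper's proof. The upper bound is the same brute-force enumeration of all teaching sets of size at most $k$, using the algorithm of Lemma~\ref{lem} as a subroutine. Your \textsc{Set Cover} reduction ($\gamma_L\equiv 0$, $q=1$, one example per set, one concept per element) is just the dual of the paper's $k$-\textsc{Hitting Set} reduction (examples $X=U$, one concept $c_A$ per set, $c^*\equiv 0$). Your explicit handling of the degenerate case where some $c_u$ coincides with $c^*$ is a small gain in rigor: the paper's analogue is an empty set $A\in F$, which it silently ignores.
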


\begin{proof}
    For given $q,k$ we search for the best probable-optimizing teaching set by going over all subsets of size at most $k$, and for each one run the algorithm of Lemma \ref{lem} to find the smallest value $p$ such that this set is $(q,p,k)$-PAC-id (alt. $(q,p,k)$-PAC-em). The set $S$ associated with the highest value $p$ achieved this way is a $(q,*,k)$-PAC-id (alt. $(q,*,k)$-PAC-em) optimal set. The total  runtime is $O( |C| \sum^k_{i=1} \binom{|X|}{i} \cdot i^2)$  which can be simplified to $O(k^3 |C| |X|^k)$.

    Improving this runtime to an FPT algorithm, i.e. to remove $k$ from the exponent, would imply a collapse of the W-hierarchy in Parameterized Complexity, as we now show. By a reduction similar to one given in \cite{shino} we show that the probable-optimizing problem is W-hard. In the $k$-Hitting Set problem we are given an integer $k$ and a family $F$ of subsets of a universe $U$ and asked if there are $k$ elements of the universe that hit every subset in $F$, and this problem is W[2]-hard \cite{Fellows}.
    The parameterized reduction from Hitting Set to the probable-optimizing problem constructs an instance with example set $X=U$, and for each $A\in F$ a concept $c_A \in C$ such that for $x \in X$ we set $c_A(x)=1$ if and only if $x \in A$. We add a target concept $c^*$ with $c^*(x)=0$ for all $x$, set $\gamma_L(c,x)=0$ for all $x,c$, set $q=1$, and crucially (for the parameterized aspect) use the value of $k$ as the size bound.
    Assuming there was an FPT algorithm for the probable-optimizer parameterized by $k$, then given an instance of $k$-Hitting Set we follow the above construction and run this algorithm on the constructed instance and check if the optimal set returned was $(q,1,k)$-PAC-id (or $(q,1,k)$-PAC-em) optimal, i.e. with $L(S)$ guessing a 1-similar concept with probability 1, in effect guessing the target always. If that is the case we have a yes-instance of the Hitting set problem, and otherwise we have a no-instance. This would constitute an FPT algorithm for $k$-Hitting Set and concludes the proof of the Theorem.
\end{proof}

By similar techniques, we are also able to give XP algorithms for the approx-id-optimizer and the approx-em-optimizer problems, parametrized by max-size $k$. Again, we show that the runtime cannot be improved to the point of removing $k$ from the exponent, unless the $W$-hierarchy collapses.

\begin{theorem}\label{thm:approx-opt}
The problem facing the Prudent-optimal PAC Teacher $T_P$ for approx-optimizing (id or em) within a fixed precision (say $d$ decimal digits), given  consistency matrix $C \times X$ and $\gamma_L$, prob factor $p$ and max-size $k$, for a given target, can be solved in time $O(k^3 |C| |X|^k)$. Moreover, when parametrized by $k$ this problem is $W[2]$-hard.
\end{theorem}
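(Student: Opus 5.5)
We follow the template of the proof of Theorem \ref{thm:prob-opt}, with one adjustment: the quantity being optimized is now the approximation factor $q$ rather than the probability $p$. The key observation is that, for a fixed target $c^*$, the set of Good concepts $G(c^*,q)$ (or $G_L(c^*,q)$) depends on $q$ only through a threshold. So only at most $|C|$ values of $q$ give genuinely different partitions of $C$ into $G$ and $B$. These candidate values are exactly the similarities $sim(c,c^*)$ (or $sim_L(c,c^*)$) for $c\in C$. We precompute all of them in time $O(|C||X|)$, rounding to the stated precision of $d$ decimal digits. Sorting them gives a nested chain of Good sets, where larger $q$ yields a smaller Good set.

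\emph{Algorithm.} We enumerate all candidate teaching sets $S$ of size at most $k$, labelled by $c^*$. There are $\sum_{i\le k}\binom{|X|}{i}$ of them. For each $S$ we want the largest $q$ such that $P[sim(L_P(S),c^*)\ge q]\ge p$. The probability that $L_P(S)$ is Good is monotone non-increasing in $q$, since Good sets shrink as $q$ grows and the prudent learner is handled in the worst case. We can therefore binary search over the sorted candidate values of $q$, or over the $10^d$ precision grid, calling the algorithm of Lemma \ref{lem} at each step. Each call costs $O(|C||S|^2)$.

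\emph{Runtime.} The binary search adds only a factor $O(\log|C|)$, or $O(d)$ for the precision grid, which is a constant for fixed precision. To obtain the exact bound $O(k^3|C||X|^k)$ we instead reuse the per-concept distributions. The counts $X_c$ depend only on $S$, not on $q$, so the $O(|C||S|^2)$ dynamic program of Algorithm \ref{alg:DP} is run once per $S$. Each threshold evaluation then needs only the cheaper loops, costing $O(|C||S|)$. For fixed precision this keeps the total within $O(|C|\sum_{i\le k}\binom{|X|}{i} i^2)\subseteq O(k^3|C||X|^k)$. Finally we return the $S$ attaining the largest feasible $q$. Monotonicity in $q$ is the step that must be checked carefully. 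It holds because enlarging $G$ can only increase the event that some Good concept strictly beats every Bad one; concepts moved from $B$ to $G$ remove competitors and add candidates.

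\emph{Hardness.} We reuse the reduction from $k$-Hitting Set in Theorem \ref{thm:prob-opt}. Set $X=U$, one concept $c_A$ per $A\in F$, and an all-zero target $c^*$, with all errors zero, and now fix $p=1$. Each $c_A\ne c^*$ has $sim(c_A,c^*)=1-|A|/|U|<1$ when $A$ is nonempty, and errors are zero, so $sim_L=sim$. The optimal achievable $q$ therefore equals $1$ exactly when some $S$ of size at most $k$ rules out every $c_A$ with certainty. That happens exactly when $S$ hits every $A\in F$, since a hit labelled $0$ excludes $c_A$ deterministically and the target keeps the maximum count $|S|$ uniquely. Hence an FPT algorithm for approx-optimizing, followed by a check of whether the returned $q$ equals $1$, would decide $k$-Hitting Set in FPT time, giving $W[2]$-hardness.

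The main obstacle is the precision and monotonicity bookkeeping. We must justify that restricting to $d$ digits, or to the finitely many similarity values, loses no optimum, and that the search over $q$ does not blow up the runtime beyond the stated bound. The hardness part is routine given the earlier reduction.
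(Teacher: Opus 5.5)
Your proof is correct. The hardness part is the paper's reduction unchanged, but your algorithm inverts the paper's loop order.

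The paper runs an outer binary search over the $10^d$-point grid of $q$ values. For each probe it enumerates all sets of size at most $k$, using Lemma~\ref{lem} as the test. In the worst case, every set therefore pays the full $O(|C||S|^2)$ in each of the $O(d)$ rounds. You enumerate each $S$ once and run Algorithm~\ref{alg:DP} once, since the counts $X_c$ do not depend on $q$. You then binary-search $q$ per set using only the $O(|C||S|)$ tail of Algorithm~\ref{alg:p_of_ACT}, so $d$ enters additively rather than multiplicatively.

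You also make explicit something the paper leaves tacit. Moving a concept from $B$ to $G$ cannot destroy the event $\max_{g\in G}X_g>\max_{b\in B}X_b$, and this pointwise fact is exactly what licenses a binary search over $q$ in either ordering. The paper's search needs feasibility of some $S$ to be downward closed in $q$, and it never says why.

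Your remark that the optimum is attained at one of at most $|C|$ similarity values buys more than you use. A plain binary search over those values costs an extra $|C||S|\log|C|$ per set, which the stated bound does not absorb. However, the corresponding Good sets are nested prefixes of the similarity order. So prefix and suffix products of the CDFs $P[X_c\le i]$ evaluate all thresholds for a fixed $S$ in $O(|C||S|)$ total. Your per-set ordering can therefore drop the fixed-precision assumption altogether, after one global sort of the similarities. The paper's outer-search structure does not readily give this.
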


\begin{proof}
    For given $p,k$ we search for the best approx-optimizing teaching set by using an outer loop that performs a binary search for the best value $q$ within $d$ decimal digits. This amounts to at most $log_2 (10^d)$ iterations of the outer loop, which is $O(1)$ for constant $d$. For each such value of $q$ we go over subsets of size at most $k$, and  run the algorithm of Lemma \ref{lem}, stopping as soon as the output probability  $p'$ is at least $p$, meaning this set is $(q,p,k)$-PAC-id (alt. $(q,p,k)$-PAC-em). When stopping this way we continue the binary search in the larger range, whereas if no set of size at most $k$ gives the successful stop then we continue in the lower range. For the highest successful value of $q$ the associated set $S$ that forced the stop is a $(*,p,k)$-PAC-id (alt. $(*,p,k)$-PAC-em) optimal set. The total  runtime is $O( |C| \sum^k_{i=1} \binom{|X|}{i} \cdot i^2)$  which can be simplified to $O(k^3 |C| |X|^k)$.

    Improving this runtime to an FPT algorithm, i.e. to remove $k$ from the exponent, would imply a collapse of the W-hierarchy in Parameterized Complexity, by a similar reduction as the one given above in the proof of Theorem \ref{thm:prob-opt}. The difference to that reduction is only that now we set $p=1$ and do the reduction to the case of fixed precision on a single decimal digit. Then, we have  a yes-instance of $k$-Hitting Set if and only if the highest successful value returned is $q=1.0$.
   Assuming there was an FPT algorithm for the approx-optimizer parameterized by $k$, there would also be one for $k$-Hitting Set. This concludes the proof of the Theorem.
\end{proof}

We also give algorithms for the size-id-optimizer and size-em-optimizer problems. These are again shown to have runtimes that are asymptotically optimal under standard complexity assumptions, in this case
the  Exponential Time Hypothesis and the Strong Exponential Time Hypothesis.

\begin{theorem}\label{thm:size-opt}
The problem facing the Prudent-optimal PAC Teacher $T_P$ for size-optimizing (id or em), given  consistency matrix $C \times X$ with $|X|=m$ and $\gamma_L$, approx factor $q$ and prob factor $p$, for a given target, can be solved in time $O(m|C|2^m)$. Moreover, under ETH no algorithm with runtime $2^{o(m)}$ exists, and under Strong ETH no algorithm with runtime $(2-\epsilon)^{m}$ exists for any $\epsilon > 0$. 
\end{theorem}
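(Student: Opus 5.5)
The proof has two parts: an exhaustive-search upper bound built on Lemma \ref{lem}, and lower bounds by a reduction from CNF-SAT.

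\textbf{Upper bound.} We enumerate all $2^m$ subsets $S \subseteq X$ in order of increasing size. Each $S$ is labelled according to $c^*$. For each one we run the algorithm of Lemma \ref{lem}, after first computing the Good set $G(c^*,q)$ (or $G_L(c^*,q)$) once in time $O(|C|m)$. We then return the first, hence smallest, $S$ whose output probability is at least $p$.

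A naive bound is $O(|C|\,|S|^2)$ per set, which gives $O(m^2|C|2^m)$. To reach $O(m|C|2^m)$ we avoid recomputing the distribution from scratch. The subsets are enumerated along a search tree in which each child set extends its parent by a single example $x$. For each concept $c$ we store the probability mass function of $X_c$ at the parent. Extending by $x$ costs $O(|S|)=O(m)$ per concept, since it is exactly one row of $DP_{sub}$. The remaining loops of Algorithm \ref{alg:p_of_ACT} already cost only $O(|C|\,|S|)$. Each of the $2^m$ tree nodes therefore costs $O(m|C|)$, and the total is $O(m|C|2^m)$. If no subset reaches probability $p$, we report that no such set exists.

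\textbf{Lower bounds.} Both follow from one reduction from CNF-SAT on $n$ variables, in the style of the Hitting-Set reduction in Theorem \ref{thm:prob-opt}, designed so that $m=O(n)$. Setting $\gamma_L\equiv 0$, $q=1$ and $p=1$ turns the problem into classical minimum teaching set size for $c^*$. With zero errors, the prudent learner is guaranteed to pick $c^*$ exactly when every other concept is inconsistent with some example in $S$. The question is then whether a teaching set of size at most $n$ exists, and the size-optimizer answers it.

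The gadget is as follows. For each variable $v_i$ there are two examples $x_i^{T}, x_i^{F}$, so $m=2n$. The target $c^*$ is $0$ everywhere. For each clause $C_j$ there is a concept equal to $1$ exactly on the examples $x_i^{T}$ with $v_i\in C_j$ and $x_i^{F}$ with $\neg v_i\in C_j$. For each $i$ there is a concept equal to $1$ exactly on $\{x_i^{T},x_i^{F}\}$. The variable concepts force any teaching set to contain at least one of the two examples per variable. A teaching set of size exactly $n$ therefore picks precisely one per variable, i.e.\ encodes an assignment, and it eliminates all clause concepts iff that assignment satisfies the formula.

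Consequently, a $2^{o(m)}$ algorithm would solve CNF-SAT in $2^{o(n)}$, contradicting ETH; sparsification is not even needed, since $m$ depends only on $n$.

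\textbf{Main obstacle.} SETH is the hard part. A $(2-\epsilon)^m$ algorithm only yields $(2-\epsilon)^{2n}$ for SAT, which is too slow. We need $m=n+O(1)$, or at most $n(1+o(1))$, examples. The first thing to try is the Hitting-Set/Set-Cover framing: minimum teaching set with $\gamma\equiv 0$ is exactly Set Cover over the universe $C\setminus\{c^*\}$ with sets indexed by examples. The number of sets is $m$. Known SETH lower bounds for Set Cover parametrized by the number of sets, e.g.\ the Cygan et al.\ hardness of Set Cover with $2^{(1-\epsilon)n_{sets}}$, then transfer directly once we check that the map $x\mapsto\{c: c(x)=1\}$ realizes arbitrary set families. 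It does: put $c^*\equiv 0$ and let each element of the universe be a concept. So the plan is to reduce Set Cover with $m$ sets to size-optimizing with $|X|=m$, and to invoke that SETH-hardness result, which subsumes the ETH part as well.
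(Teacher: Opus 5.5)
Your proof is correct. For the SETH bound it follows the paper's route: the paper reduces from Hitting Set with universe $X$ and cites Cygan et al. That is your Set Cover with $m$ sets, read through duality. It is worth citing it in the Hitting Set/universe-size form, since Set Cover parameterized by universe size is only conjectured hard. Two parts of your argument differ from the paper, and both are worth keeping.

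\textbf{Upper bound.} The paper runs the algorithm of Lemma~\ref{lem} on every subset and simply asserts $O(m|C|2^m)$. Taken literally, that costs
$|C|\sum_{i}\binom{m}{i}i^2=|C|\,m(m+1)2^{m-2}=\Theta(m^2|C|2^m)$.
Your incremental update along the subset tree is what actually delivers the stated bound: one $DP_{sub}$ row per concept per node, plus the $O(|C||S|)$ remaining loops. So your version repairs a polynomial-factor gap in the paper's runtime claim.

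\textbf{ETH bound.} The paper cites Calabro et al.\ for Hitting Set. Your variable/clause gadget with $m=2n$ proves the same thing directly from 3-SAT, so that part is self-contained.

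Three small points:
\begin{itemize}
\item In a DFS over the subset tree, ``the first, hence smallest'' no longer holds, so you should track the minimum size found.
\item A SETH-conditional bound does not subsume an ETH-conditional one, since ETH is the weaker hypothesis. Your gadget already covers ETH, so nothing is actually missing.
\item In the Set Cover reduction, reject instances with an element lying in no set. The corresponding concept equals $c^*$ on all of $X$, is Good under $q=1$, and would break the equivalence. The same caveat applies to empty sets $A\in F$ in the paper's reduction.
\end{itemize}
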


\begin{proof}
    For given $q,p$ we search for the size-optimizing teaching set by going over all subsets and for each such $S$ run the algorithm of Lemma \ref{lem} to check if $S$ is $(q,p,|S|)$-PAC-id (alt. $(q,p,|S|)$-PAC-em). The largest value $k$ with a set $S$ with $|S|=k$ giving a positive answer means a $(q,p,*)$-PAC-id (or $(q,p,*)$-PAC-em) optimal set of size $k$. The total  runtime is $O( m|C| 2^m)$.

    It is known that the Hitting Set problem on a universe of size $m$ cannot be solved in time $2^{o(m)}$ unless ETH fails \cite{calabro}, and under Strong ETH no algorithm with runtime $(2-\epsilon)^{m}$ for any $\epsilon > 0$ \cite{cygan}. We reduce Hitting Set to size-optimizing teacher by giving a similar reduction as the one given above in the proof of Theorem \ref{thm:prob-opt}, to show the lower bound stated in the Theorem. The difference to that reduction is only that now we fix $q=p=1$ and the size-constraint varies with the size of $S$. Moreover, the answer is checked against the size of the smallest $S$ returned and the Hitting Set is a yes-instance if and only if $|S| \leq k$, for the value of $k$ given as part of the Hitting Set instance.
    Assuming there was an algorithm for the size-optimizer with runtime $O(2^{o(|X|)})$ then this would translate, through this reduction, to an algorithm solving Hitting Set in time $2^{o(m)}$, refuting ETH.
Likewise, an algorithm for the size-optimizer with runtime $O^*((2-\epsilon)^{|X|})$ for $\epsilon >0$  would translate, through this reduction, to an algorithm solving Hitting Set in time $O^*((2-\epsilon)^{m})$, refuting Strong ETH.
    This concludes the proof of the Theorem.
\end{proof}


\subsection{Heuristic algorithms computing teaching sets}\label{sec:heur}

We discuss algorithms for $T_H$, i.e. heuristics computing good teaching sets for an error-aware learner like $L_P$.
One option is to focus on the \textbf{Probably} Correct aspect of PAC Teaching. Intuitively, we want to teach with examples that {\em uniquely identify} the target, at least with few concepts sharing the label of the target on this example.  For a target concept $c^*$  first go over each example $x\in X$ and calculate, for some distribution
over concepts, the expected value of the event that c is L-consistent with $(x,c^*(x))$. We call this the Uniqueness of $x$.
\begin{equation*}
  Uniqueness(x)  
  = \mathbb{E}_{c \sim {\mathcal D}(C)} [ P(\hat{c}(x) = c^*(x))] 
    \label{eq:uniqueness}
\end{equation*}
Then sort the examples by the Uniqueness value and construct a teaching set greedily adding examples of lowest value first. 

Another option is to focus on the \textbf{Approximately} Correct aspect of PAC Teaching. Intuitively, we want to teach with an example $x'$ such that for the set of concepts $C_{x'}$ that share the label
of the target on this example, it holds that these concepts are similar to the target.  For a target concept $c^*$  first go over each example $x' \in X$ and calculate the set of concepts $C_{x'} = \{c \in C: c(x')=c^*(x')\}$, and then for some distribution
over examples, calculate the expected value of the event that $c$ drawn from $C_{x'}$, by some distribution,  is L-consistent with $(x,c^*(x))$.  We call this the Homogeneity-value of $x'$. Then sort the examples according to Homogeneity-value and construct a teaching set greedily adding examples of highest value first. 

We implemented and ran these two heuristics on the concept class over multiples-of-k given in the plots of Figure \ref{fig:llm_experiment}. In the below table we see the results for the case of a teaching set of size 1. Both heuristics select a teaching set $x$ of very low total teaching error of at most 0.02, for all 5 values of k, showing their usefulness in this case. 
\begin{table}[ht]
\centering\caption{Selected teaching example `x' (teaching size 1) by heuristic algorithm. The selected teaching examples have consistently low teaching error across all target concepts. }
\begin{tabular}{clcccc}
\hline
$c_{\text{target}}$ & Criterion & $x$ & Heuristic Score & Mean Deductive Error & Teaching Error \\
\hline
\multirow{2}{*}{5}
    & Uniqueness  & 10  & 0.0 & 0.00 & 0.02 \\
    & Homogeneity & 10 & 0.124 & 0.00 & 0.02 \\
\hline
\multirow{2}{*}{7}
    & Uniqueness  & 14  & 0.0 & 0.00 & 0.00 \\
    & Homogeneity & 14 & 0.089 & 0.00 & 0.00 \\
\hline
\multirow{2}{*}{11}
    & Uniqueness  & 11  & 0.0 & 0.00 & 0.02 \\
    & Homogeneity & 305 & 0.086 & 0.79 & 0.00 \\
\hline
\multirow{2}{*}{13}
    & Uniqueness  & 13  & 0.0 & 0.00 & 0.00 \\
    & Homogeneity & 132 & 0.073 & 0.76 & 0.02 \\
\hline
\multirow{2}{*}{17}
    & Uniqueness  & 34  & 0.0 & 0.00 & 0.00 \\
    & Homogeneity & 34 & 0.077 & 0.00 & 0.00 \\
\hline
\end{tabular}
\label{tab:h_algo_1td}
\end{table}

When deductive error is less influential, see Table \ref{tab:h_algo_2td} for teaching sets of size two, these heuristics should be less useful.

\begin{table}[ht]
\centering
\caption{Selected teaching set `$x_1, x_2$' by heuristic algorithm in the scenario with high total teaching error. As deductive errors are less correlated with teaching error in this setting, the non-optimal teaching error resulting from these PAC teaching sets is expected. }
\begin{tabular}{clcccc}
\hline
$c_{\text{target}}$ & Criterion & $x_1$, $x_2$ & Heuristic Score & Mean Deductive Error & Teaching Error \\
\hline
\multirow{2}{*}{5}
    & Uniqueness  & \{35, 119\}  & 0.13 & 0.23 & 0.77 \\
    & Homogeneity & (35, 119) & 0.13 & 0.23 & 0.77 \\
\hline
\multirow{2}{*}{7}
    & Uniqueness  & \{35, 660\}  & 0.2 & 0.30 & 0.64 \\
    & Homogeneity & \{455, 325\} & 0.2 & 0.43 & 0.94 \\
\hline
\multirow{2}{*}{11}
    & Uniqueness  & \{55, 190\}  & 0.3 & 0.18 & 0.39 \\
    & Homogeneity & \{55, 190\} & 0.3 & 0.18 & 0.39 \\
\hline
\multirow{2}{*}{13}
    & Uniqueness  & \{429, 374\}  & 0.38 & 0.44 & 0.78 \\
    & Homogeneity & \{845, 750\} & 0.38 & 0.38 & 0.65 \\
\hline
\multirow{2}{*}{17}
    & Uniqueness  & \{357, 672\}  & 0.28 & 0.50 & 1.00 \\
    & Homogeneity & \{595, 385\} & 0.28 & 0.48 & 0.98 \\
\hline
\end{tabular}
\label{tab:h_algo_2td}
\end{table}

We can also combine to give a heuristic for the \textbf{Probably and Approximately} Correct aspect of PAC Teaching. Calculate a combined value for each example, based on some weighting between having low Uniqueness and simultaneously having high Homogeneity, say $Combined(x)= (1-Uniqueness(x))+ \alpha \times Homogeneity(x)$ for a chosen factor $\alpha$. Then sort by these values and construct a teaching set greedily adding examples of highest $Combined$ value first.

\section{Conclusion}

We have introduced the PAC teaching framework, that is useful whenever we have a learner who does not perform perfect deductive inference. We have defined different behaviors of both the teacher and the learner in situations where these errors are stochastic and measurable. We have given algorithms finding the optimal teaching sets in these situations, and shown that the runtime of these algorithms cannot be improved unless standard complexity assumptions fail. 

The PAC teaching framework is useful even if we do not know the algorithms used by the learner. This is clear from Table \ref{tab:h_algo_1td}, where 
the LLM is not applying a learning algorithm known to us, but after deriving its deductive errors we are able to use a heuristic algorithm to compute teaching sets that increase the chance of successful PAC teaching.


\subsection*{Acknowledgements}

JHO thanks CIPROM/2022/6 (FASSLOW)  funded by Generalitat Valenciana, and Spanish grant PID2024-162030OB-100 (ROBIN) funded by MCIN/AEI/10.13039/501100011033 and ERDF A way of making Europe, Cátedra ENIA-UPV in Sustainable AI Development, TSI-100930-2023-9, and INCIBE’s Chair funded by the EU-NextGenerationEU through the Spanish government's Plan de Recuperación, Transformación y Resiliencia, and EUR2024-153548 (PREDAIT) ``Towards Predictable AI" from ``Spanish Europe Excelencia" 2024. JHO's research is supported by OpenAI's grant to the 'AI Progress through the Lens of Predictable AI Ecosystems' programme, which is based at the Leverhulme Centre for the Future of Intelligence at the University of Cambridge. 

\bibliography{references}

\newpage

\section{Appendix}


\subsection{LLM prompts}
Here are the prompts used in our LLM experiments. See also the Github repository mentioned in the footnote.

\begin{systemprompt}[Deductive Error - System prompt]
    Answer format: 'Answer = Yes' or 'Answer = No'
\end{systemprompt}
\begin{systemprompt}[Deductive Error - User message]
    Is \{k\} a divisor of \{x\}?
\end{systemprompt}

\begin{systemprompt}[Low Inductive Load - Teaching System prompt]
    Answer format: Answer = 5 or Answer = 7 or Answer = 11 or Answer = 13 or Answer = 17
\end{systemprompt}

\begin{systemprompt}[Low Inductive Load - Teaching User message]
    Which of 5, 7, 11, 13 and 17 is a divisor of \{x\}?
\end{systemprompt}

\begin{systemprompt}[High Inductive load - Teaching System prompt ]
Answer format: Answer = 5 or Answer = 7 or Answer = 11 or Answer = 13 or Answer = 17
\end{systemprompt}

\begin{systemprompt}[High Inductive load - Teaching User message]
    Which of 5, 7, 11, 13 and 17 is not a divisor of \{x1\}, while it is a divisor of \{x2\}?
\end{systemprompt}

\end{document}